\newtheorem{corollary}{Corollary}
\newtheorem{theorem}{Theorem}
\theoremstyle{plain}
\theoremstyle{definition}
\theoremstyle{remark}
\title{Joint covariate-alignment and concept-alignment: a framework for domain generalization }
\author{
  Thuan Nguyen \\
 Department of Computer Science \\ Tufts University \\ Medford, MA 02155 \\
  \texttt{\{Thuan.Nguyen@tufts.edu} \\
   \And
  Boyang Lyu \\
  Department of Electrical and Computer Engineering \\ Tufts University \\ Medford, MA 02155\\
  \texttt{Boyang.Lyu@tufts.edu} \\
     \And
  Prakash Ishwar \\
  Department of Electrical and Computer Engineering \\ Boston University \\ Boston, MA 02215\\
  \texttt{pi@bu.edu} \\
  \And
    Matthias Scheutz \\
 Department of Computer Science \\ Tufts University \\ Medford, MA 02155 \\
  \texttt{\{Matthias.Scheutz@tufts.edu} \\
     \And
  Shuchin Aeron \\
  Department of Electrical and Computer Engineering \\ Tufts University \\ Medford, MA 02155\\
  \texttt{Shuchin@ece.tufts.edu}
}
\newif\ifcomments
    \def\saedit#1{{$\!$\color{blue} [SA: #1]}}
    \def\tnedit#1{{$\!$\color{red} [TN: #1]}}
    \def\tnrep#1#2{{\color{red} \sout{#1}}{\color{blue}#2}}
    \def\tncomment#1{{$\!$\color{green} [TN: #1]}}
    \def\picomment#1{{$\!$\color{orange} [PI: #1]}}
    \def\piedit#1#2{{\color{red} \sout{#1}}{\color{blue}#2}}
    \def\saedit#1{}
    \def\tnedit#1{}
    \def\tncomment#1{}        
    \def\tnrep#1#2{#2}
    \def\picomment#1{}
    \def\piedit#1#2{#2}
\begin{document}
\maketitle

\begin{abstract}
In this paper, we propose a novel domain generalization  (DG) framework based on a new upper bound to the risk on the unseen domain. Particularly, our framework proposes to jointly minimize both the covariate-shift as well as the concept-shift between the seen domains for a better performance on the unseen domain. While the proposed approach can be implemented via an arbitrary combination of covariate-alignment and concept-alignment modules, in this work we use well-established approaches for distributional alignment namely, Maximum Mean Discrepancy (MMD) and covariance Alignment (CORAL), and use an Invariant Risk Minimization (IRM)-based approach for concept alignment. Our numerical results show that the proposed methods perform as well as or better than the state-of-the-art for domain generalization on several data sets\footnote{This paper was accepted at 32nd IEEE International Workshop on Machine Learning for Signal Processing (MLSP 2022).}.
\end{abstract}

\keywords{Domain generalization, domain alignment, out-of-distribution generalization, distribution shift.}

\section{Introduction}
\label{sec: introduction}
Domain generalization (DG) has been studied extensively over the past decade as an important practical problem arising in a number of areas such as computer vision, signal processing, and medical imaging \cite{wang2022generalizing} \cite{zhou2021domain}. Like standard learning settings, DG  aims to learn a model from several seen domains (training data) that can generalize well on an unseen domain (test data). However, in contrast to the standard setting where the test data is assumed to come from the same distribution as training data, in DG, the distribution of the test data is different, \textit{i.e.}, there is a presence of what is referred to as a \emph{distribution shift}. This phenomena can be observed in many practical settings \cite{taori2020measuring}.  

A number of approaches for DG are based on the assumption that there exist domain-invariant features that are transferable and unchanged from domain to domain. Thus, a classifier designed on top of these features will likely generalize well to the unseen domain. In practice, DG methods based on this assumption consist of two steps: (\textit{a}) learning a good representation function that outputs the domain-invariant features,  and (\textit{b}) designing a classifier based on these domain-invariant features. 

\piedit{}{Different definitions of domain-invariant features have been proposed and they lead to different feature selection schemes.} In \tnrep{\cite{kumar2018co, sankaranarayanan2018generate, muandet2013domain, li2018domain, ghifary2016scatter, erfani2016robust, jin2020feature, blanchard2021domain, bui2021exploiting,shen2018wasserstein,lyu2021barycentric},}{\cite{li2018domain, ghifary2016scatter, erfani2016robust, jin2020feature, blanchard2021domain, bui2021exploiting,shen2018wasserstein,lyu2021barycentric},} a feature is defined as domain-invariant if its marginal distribution is unchanged over domains. On the other hand, in \cite{arjovsky2019invariant,li2021invariant,ahuja2021invariance,wang2021respecting, zhao2020domain, salaudeen2021exploiting, lu2021nonlinear,ahuja2020invariant} a feature is considered as domain-invariant if its conditional distribution of the label given the representation feature is unchanged from domain to domain. 

The differences in the definitions are rooted in the different modeling assumptions regarding the domains. Particularly, in the \textit{covariate-shift} setting \cite{kouw2018introduction}, the marginal distribution of data varies across the domains. On the other hand, in the setting of \textit{concept-shift} \cite{kouw2018introduction}, the conditional distribution of the label (class) conditioned on the data varies from domain to domain\footnote{The setting where the conditional distribution of data conditioned on the label (class) changes from domain to domain is also referred to as \textit{concept-shift}. However, we do not consider that scenario in this paper.}. 
We survey related work in this context in Sec.~\ref{sec: related work}.

In this paper, we revisit the seminal upper bound for the \piedit{}{prediction} risk in the unseen domain derived in \cite{ben2007analysis} and show the necessity of jointly designing a representation function that minimizes both the covariate-shift and the concept-shift \piedit{}{risks}.  
We make the following contributions:
\begin{itemize}
    \item We derive a novel upper bound \piedit{}{for the prediction} risk in the unseen domain that \piedit{depends on}{consists of} two terms, namely, a covariate-alignment term and a concept-alignment term. This theoretical result motivates a domain generalization \piedit{framework}{algorithm} that combines both covariate-alignment and concept-alignment algorithms. 
     
    \item We propose two new algorithms MMD-CEM and CORAL-CEM that combine, \piedit{}{respectively, the} CORrelation ALignment algorithm (CORAL) \cite{sun2016deep} and the Maximum Mean Discrepancy algorithm (MMD) \cite{li2018domain} with the Conditional Entropy Invariant Risk Minimization \tnrep{(IRM)}{} algorithm (CEM) \cite{nguyen2022conditional}. 
    
    \item We compare the proposed algorithms (MMD-CEM, CORAL-CEM) with \piedit{a number of competitive}{six competing} \tnrep{methods for DG.}{DG methods.} Our methods \piedit{show}{exhibit} similar or better performance \piedit{in comparison to six compared algorithms}{compared to the six alternatives} on both CS-CMNIST \cite{ahuja2021empirical} and CMNIST \cite{arjovsky2019invariant} datasets. 
\end{itemize}
The remainder of this paper is structured as follows. In Sec.~\ref{sec: related work}, we summarize the recent work on  DG dealing with covariate-shift and concept-shift. Sec.~\ref{sec: problem setup} formally defines the problem and clarifies notation used. Sec.~\ref{sec: main results} provides the main theoretical results which motivate the practical algorithms of Sec.~\ref{sec: practical method}. Finally, we provide the numerical results in Sec.~\ref{sec: numerical results} and conclude in Sec.~\ref{sec: conclusion}.

\section{Related work}
\label{sec: related work}


Under the covariate-shift setting,
domain alignment methods focus on aligning the marginal distributions of the representation from different seen domains by minimizing distributional divergences or distances \tnrep{\cite{kumar2018co, sankaranarayanan2018generate,russo2018source,hu2018duplex, muandet2013domain, li2018domain, ghifary2016scatter, erfani2016robust, jin2020feature, blanchard2021domain, bui2021exploiting,shen2018wasserstein,lyu2021barycentric}}{\cite{li2018domain, ghifary2016scatter, erfani2016robust, jin2020feature, blanchard2021domain, bui2021exploiting,shen2018wasserstein,lyu2021barycentric}}.
For instance, in \cite{shen2018wasserstein,lyu2021barycentric}, the Wasserstein distance between distributions of the seen domains in representation space is minimized. 
In \cite{li2018domain}, Li \textit{et al.} proposed the MMD algorithm to minimize the maximum mean discrepancy distance between seen domain distributions in the representation space. In a similar vein, the CORAL algorithm \cite{sun2016deep} is based on the idea of matching the mean and covariance of feature distributions from different domains. 
In \tnrep{\cite{muandet2013domain,erfani2016robust},}{\cite{erfani2016robust},} the authors use deep neural networks to minimize the difference between the variances of transformed features from seen domains to achieve domain generalization. 
In \cite{bui2021exploiting}, the authors aim to learn the domain-invariant features with marginal distributions unchanged from domain to domain together with the domain-specific features to enhance the generalization performance.

Since the conditional distribution of the label given the representation variable is directly related to the classification performance, there are many works that aim to deal with concept-shift, \textit{i.e.}, minimizing the divergence between the conditional distribution of the labels (concepts) given the representation \cite{arjovsky2019invariant,li2021invariant,ahuja2021invariance,wang2021respecting, zhao2020domain, salaudeen2021exploiting, lu2021nonlinear,ahuja2020invariant}.
In  \cite{arjovsky2019invariant,lu2021nonlinear}, linear and non-linear IRM algorithms are proposed for learning the invariant features such that  their conditional distributions are unchanged \piedit{according to}{across} domains, leading to the construction of an optimal classifier for all seen domains. 
In \cite{nguyen2022conditional}, Nguyen \textit{et al.} \piedit{}{developed a} conditional entropy minimization principle to remove spurious invariant features from the invariant features \piedit{}{learned by the} IRM algorithm \cite{arjovsky2019invariant}. In \cite{li2021invariant}, concept-alignment is achieved via minimizing the mutual information of the label given the representation variable for a given domain. 
In \cite{wang2021respecting}, Wang \textit{et al.} handle concept-shift by directly aligning the conditional distribution within each class regardless of domains via the use of Kullback–Leibler divergence.

To the best of our knowledge, there are only \piedit{several}{two} works \piedit{}{in the DG setting} that simultaneously deal with both the covariate-shift and concept-shift  \cite{nguyen2021domain,guo2021out}\footnote{\piedit{}{But there are} several works that simultaneously handle both covariate-shift and concept-shift under the Domain Adaptation (DA) setting, for example the work in \cite{yang2020class,wen2019bayesian}. However, DA is not considered in this paper.}. 
In \cite{nguyen2021domain}, Nguyen \textit{et al.} proposed an algorithm that is capable of achieving both covariate-alignment and concept-alignment by enforcing the latent representation to be invariant under all transformation functions. Their algorithm, however, requires the use of invertible and \piedit{differential}{differentiable} transformation functions together with a generative adversarial network which is known \piedit{}{to be} hard to train. 
In \cite{guo2021out}, motivated by some examples where using concept-alignment \piedit{algorithm only is not enough}{alone is inadequate} for achieving a good generalization, a covariate-alignment algorithm is heuristically \piedit{incorporated}{combined} with a concept-alignment algorithm to achieve a better out-of-distribution generalization. 
In contrast to \cite{nguyen2021domain,guo2021out}, our work appears to be the first work that uses the upper bound for the \piedit{prediction} risk in the unseen domain to theoretically \piedit{prove}{motivate} the necessity of achieving both covariate and concept alignment in DG.

\section{Problem Formulation}
\label{sec: problem setup}
Let $\bm{x} \in \mathbb{X} \subseteq \mathbb{R}^d$ denote the input data and $y \in \mathbb{Y}$ denote the label. The domain generalization task aims to learn a representation function $f: \mathbb{R}^d \rightarrow \mathbb{R}^{d'}$, $d'\leq d$, followed by a classifier $g: \mathbb{R}^{d'} \rightarrow \mathbb{Y}$ 
\tnrep{}{trained using data} from seen domains $D^{(s)}$, but 
\tnrep{}{generalizes} well to data from an unseen domain $D^{(u)}$. 

Let $\bm{z} = f(\bm{x})$ denote the latent variable induced by input data $\bm{x}$ under the mapping $f$.
\piedit{}{We denote the distributions of data from a seen domain and the unseen domain by $p^{(s)}(\bm{x})$ and $p^{(u)}(\bm{x})$\tnrep{}{,} respectively. The distributions of corresponding latent variables in a seen domain and the unseen domain are denoted by $p^{(s)}(\bm{z})$ and $p^{(u)}(\bm{z})$\tnrep{}{,} respectively. We use $Z$ to denote the latent random variable and $Y$ to denote the label random variable.}

\piedit{}{An optimal labeling function for a given domain and a given representation function $f$ is the labeling function that minimizes the prediction risk in the domain using the given representation function.}
For a given representation function $f$, we denote the optimal labeling functions 
\picomment{Optimal in what sense? Is it exactly what I wrote in the last sentence?}
\tncomment{Yes!}
from representation space to label space in a seen domain and the unseen domain by $l^{(s)}(\bm{z})$ and $l^{(u)}(\bm{z})$ respectively. 
For a given $f$,  the risks of using a classifier $g$ (possibly stochastic) in a seen domain and the unseen domain are defined by:
\begin{equation}
\label{eq: 1}
    \epsilon^{(s)}(g, l^{(s)}) = \mathop{\mathbb{E}}_{\bm{z} \sim p^{(s)}(\bm{z}) } d(g(\bm{z}), l^{(s)}(\bm{z})),
\end{equation}
\begin{equation}
\label{eq: 2}
    \epsilon^{(u)}(g, l^{(u)}) = \mathop{\mathbb{E}}_{\bm{z} \sim p^{(u)}(\bm{z}) } d(g(\bm{z}),l^{(u)}(\bm{z})),
\end{equation}
where $\mathop{\mathbb{E}}[\cdot]$ denotes expectation and $d(\cdot,\cdot)$ is a loss function that captures the mismatch between the classifier $g$ and the optimal labeling functions $l^{(s)}, l^{(u)}$. In addition, \piedit{}{for a given representation function $f$ we define the following quantity}:
\begin{equation}
\label{eq: 3}
     \epsilon^{(s)}(g, l^{(u)}) = \mathop{\mathbb{E}}_{\bm{z} \sim p^{(s)}(\bm{z}) } d(g(\bm{z}),l^{(u)}(\bm{z}))
\end{equation}
\piedit{}{which is the loss of using classifier $g$ to input data from a seen domain $s$ transformed by $f$ and labeled with the optimum labeling function of the unseen domain for representation $f$.}
Since $p^{(u)}(\bm{z})$ and $l^{(u)}(\bm{z})$ depend on $f$, we want to learn a good representation function $f$ together with a  classifier $g$ to minimize the prediction risk in the unseen domain  $\epsilon^{(u)}(g, l^{(u)})$. 

\section{Main results}
\label{sec: main results}

\begin{theorem}
\label{theorem: 1}
\piedit{Assume that}{If} the loss function $d(\cdot,\cdot)$ is  non-negative, \piedit{}{symmetric}  and satisfies the triangle inequality, \piedit{}{then} for a given representation function $f$: 
\begin{eqnarray}
\epsilon^{(u)}(g, l^{(u)}) \! &\leq& \epsilon^{(s)}(g,l^{(s)})  \nonumber\\
&\!+& \int_{\bm{z}}|p^{(u)}(\bm{z}) - p^{(s)}(\bm{z})| \: d(g(\bm{z}),l^{(u)}(\bm{z})) \, d\bm{z} \nonumber\\
&\!+& \int_{\bm{z}} p^{(s)} (\bm{z}) \: d(l^{(u)}(\bm{z}), l^{(s)}(\bm{z})) \, d\bm{z}.  \label{eq: theorem}
\end{eqnarray}
\end{theorem}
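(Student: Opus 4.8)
The plan is to mimic the classical domain-adaptation risk decomposition of Ben-David \textit{et al.}, peeling the unseen-domain risk apart into the seen-domain risk, a marginal-mismatch (covariate) term, and a labeling-function mismatch (concept) term, invoking only the three stated properties of the loss $d$.

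First I would start from the definition in \eqref{eq: 2}, namely $\epsilon^{(u)}(g,l^{(u)}) = \int_{\bm{z}} p^{(u)}(\bm{z})\, d(g(\bm{z}),l^{(u)}(\bm{z}))\, d\bm{z}$, and add and subtract $p^{(s)}(\bm{z})\, d(g(\bm{z}),l^{(u)}(\bm{z}))$ inside the integral. This writes the unseen-domain risk as $\epsilon^{(s)}(g,l^{(u)}) + \int_{\bm{z}}\bigl(p^{(u)}(\bm{z})-p^{(s)}(\bm{z})\bigr)\, d(g(\bm{z}),l^{(u)}(\bm{z}))\, d\bm{z}$, where the first summand is exactly the quantity defined in \eqref{eq: 3}. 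Because $d(\cdot,\cdot)\ge 0$, the integrand of the residual is bounded above by $\lvert p^{(u)}(\bm{z})-p^{(s)}(\bm{z})\rvert\, d(g(\bm{z}),l^{(u)}(\bm{z}))$, so the residual is at most the covariate-alignment term appearing in \eqref{eq: theorem}.

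Next I would bound $\epsilon^{(s)}(g,l^{(u)})$. Applying the triangle inequality to $d$ pointwise in $\bm{z}$ gives $d(g(\bm{z}),l^{(u)}(\bm{z}))\le d(g(\bm{z}),l^{(s)}(\bm{z})) + d(l^{(s)}(\bm{z}),l^{(u)}(\bm{z}))$. Integrating both sides against $p^{(s)}(\bm{z})$ and using linearity of the integral yields $\epsilon^{(s)}(g,l^{(u)})\le \epsilon^{(s)}(g,l^{(s)}) + \int_{\bm{z}} p^{(s)}(\bm{z})\, d(l^{(s)}(\bm{z}),l^{(u)}(\bm{z}))\, d\bm{z}$, where the first summand is \eqref{eq: 1}. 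Finally, symmetry of $d$ lets me replace $d(l^{(s)}(\bm{z}),l^{(u)}(\bm{z}))$ by $d(l^{(u)}(\bm{z}),l^{(s)}(\bm{z}))$, matching the concept-alignment term in \eqref{eq: theorem}. Chaining the two displays completes the argument.

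There is no genuine obstacle here; the whole proof is a routine two-step triangle-inequality estimate. The only thing to be careful about is the bookkeeping: tracking which density weights each term (the middle term must carry $\lvert p^{(u)}-p^{(s)}\rvert$, the last term must carry $p^{(s)}$), and making sure each of the three hypotheses on $d$ is used exactly where needed --- non-negativity to pass from the signed density difference to its absolute value, the triangle inequality to insert $l^{(s)}$ between $g$ and $l^{(u)}$, and symmetry to orient the final loss as $d(l^{(u)},l^{(s)})$.
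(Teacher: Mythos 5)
Your proposal is correct and follows essentially the same route as the paper: both decompose $\epsilon^{(u)}(g,l^{(u)})$ into the seen-domain risk plus a marginal-density-difference term bounded via non-negativity of $d$ (equivalently $a\le|a|$) and a term $\epsilon^{(s)}(g,l^{(u)})-\epsilon^{(s)}(g,l^{(s)})$ bounded pointwise by the triangle inequality and symmetry. The only difference is cosmetic --- you add and subtract $p^{(s)}(\bm{z})\,d(g(\bm{z}),l^{(u)}(\bm{z}))$ inside the integral, while the paper adds and subtracts the corresponding risks $\epsilon^{(s)}(g,l^{(s)})$ and $\epsilon^{(s)}(g,l^{(u)})$ at the outset.
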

\begin{proof}
We have:
\begin{small}
\begin{eqnarray}
\epsilon^{(u)}(g, l^{(u)}) \! &=& \epsilon^{(u)}(g,l^{(u)}) \nonumber\\
&+&  \epsilon^{(s)}(g,l^{(s)}) - \epsilon^{(s)}(g,l^{(s)}) \nonumber\\
&+& \epsilon^{(s)}(g,l^{(u)}) - \epsilon^{(s)}(g,l^{(u)}) \label{eq: vy 99}\\
&\leq& \epsilon^{(s)}(g,l^{(s)}) \nonumber \\
&+& |\epsilon^{(u)}(g,l^{(u)})-\epsilon^{(s)}(g,l^{(u)})| \nonumber\\
&+& \epsilon^{(s)}(g,l^{(u)}) - \epsilon^{(s)}(g,l^{(s)}) \label{eq: vy 100}\\
&\leq& \epsilon^{(s)}(g,l^{(s)})  \nonumber\\
&+& \int_{\bm{z}}|p^{(u)}(\bm{z}) - p^{(s)}(\bm{z})| \: d(g(\bm{z}),l^{(u)}(\bm{z})) \, d\bm{z} \nonumber\\
&+& \int_{\bm{z}} p^{(s)}(\bm{z}) \: d(l^{(u)}(\bm{z}), l^{(s)}(\bm{z})) \, d\bm{z} \label{eq: vy 101}
\end{eqnarray}
\end{small}where (\ref{eq: vy 100}) follows from the reorganization of (\ref{eq: vy 99}) and the fact that $a \leq |a|$, $\forall a$ and (\ref{eq: vy 101}) follows from the definitions of $\epsilon^{(u)}(g,l^{(u)})$, $ \epsilon^{(s)}(g,l^{(s)})$, $\epsilon^{(s)}(g,l^{(u)})$ in (\ref{eq: 1}), (\ref{eq: 2}), (\ref{eq: 3}), the symmetry of $d(\cdot,\cdot)$, and the  triangle inequality $d(g(\bm{z}),l^{(u)}(\bm{z}))-d(g(\bm{z}),l^{(s)}(\bm{z})) \leq d(l^{(u)}(\bm{z}), l^{(s)}(\bm{z}))$.
\end{proof}

The bound  in (\ref{eq: theorem}) characterizes the risk of using a classifier trained on a seen domain but applied to the unseen domain. 
The bound contains three terms: \textit{(a)} the first term captures the risk induced by the classifier on a seen domain \piedit{}{for a given representation function $f$}, \textit{(b)} the second term measures the discrepancy \piedit{of}{between} the marginal distributions \piedit{between}{of} seen and unseen domains \piedit{in latent space}{in the latent space corresponding to representation function $f$}, and \textit{(c)} the third term quantifies the mismatch between optimal classifiers \piedit{in}{for} seen and unseen domains \piedit{}{for the given $f$}. 

Our bound shares some similarities with the 
bound proposed by Ben-David \textit{et al.} \cite{ben2007analysis}\tnrep{}{.} \tnrep{}{Particularly, for any representation function $f$, the following bound holds \cite{ben2007analysis}:} 
%
\begin{small}
\begin{eqnarray}
\label{eq: ben}
     \epsilon^{(u)} (g,l^{(u)}) \! \leq \! \epsilon^{(s)} (g, l^{(s)}) \!+\! d_{\mathcal{H}} \Big(p^{(u)}(\bm{z}), p^{(s)}(\bm{z}) \Big) \!+\! \lambda
\end{eqnarray}\end{small}where $d_{\mathcal{H}}$ is $\mathcal{H}$-divergence \cite{kifer2004detecting} and:
\begin{small}
\begin{eqnarray}
\label{eq: combined risk}
\lambda = \inf_{g} \Big( \epsilon^{(s)}(g,l^{(s)}) + \epsilon^{(u)}(g,l^{(u)}) \Big).
\end{eqnarray}\end{small}\piedit{}{In \cite{ben2007analysis}, the above bound was developed for the DA setting. We have adapted it to the DG setting by identifying ``source'' domains in DA with ``seen'' domains in DG and the ``target'' domain in DA with the ``unseen'' domain in DG.}
\piedit{As seen, while}{While} the first two terms of (\ref{eq: theorem}) and (\ref{eq: ben}) are quite similar, the main difference comes from the way that the third term is optimized \piedit{}{in practice}. 
%
\piedit{}{The practical DA algorithm proposed in \cite{ben2007analysis} ignores optimizing the third term $\lambda$ of the upper bound in (\ref{eq: ben}) since the second term of the infimum in (\ref{eq: combined risk}) cannot be estimated during training. Moreover, the first term of the infimum in (\ref{eq: combined risk}) is already captured in the first term of the upper bound in (\ref{eq: ben}). }
\piedit{}{In the DG setting too, since the unseen domain samples are not available during training, one usually optimizes the bound only over several seen domains as in \cite{nguyen2022conditional,wang2022generalizing,zhou2021domain}.}
In contrast, the third term in (\ref{eq: theorem}) measures the mismatch of optimal classifiers between domains and, therefore, can be practically optimized via designing a representation function $f$ followed by a classifier $g$ that is optimal for all (seen) domains. 
As will be shown later, finding an optimal classifier over all domains encourages the use of concept-alignment algorithms. 
Indeed, designing a classifier that is optimal over all domains is the principle behind the recently proposed IRM algorithm \cite{arjovsky2019invariant}.

It is also worth comparing our bound with the bound proposed by Lyu \textit{et al.} \cite{lyu2021barycentric}. 
Indeed, Lemma 1 in \cite{lyu2021barycentric} extends the work in \cite{ben2007analysis} to establish a new upper bound on the prediction risk in the unseen domain that is a function of both the representation mapping $f$ and the classifier $g$. 
While the first two terms of the bound in \cite{lyu2021barycentric} are quite similar to our first two terms, their third term is a constant which is completely independent of both $f$ and $g$. 
Thus, the bound in \cite{lyu2021barycentric} does not encourage the use of concept-alignment algorithms. In practice, the proposed algorithms in \cite{ben2007analysis}  and \cite{lyu2021barycentric} only aim to achieve covariate-alignment, \textit{i.e.}, minimizing the discrepancy of marginal distributions between domains.

Next, to explicitly show that Theorem \ref{theorem: 1} motivates the combination of both covariate-alignment and concept-alignment algorithms, we define the following two terms:
\begin{eqnarray}
\label{eq: 104}
     \Delta_{1}  = \max_{\bm{z}} \, |p^{(s)}(\bm{z}) - p^{(u)}(\bm{z})|,
\end{eqnarray}
and:
\begin{eqnarray}
   \Delta_{2} = \max_{\bm{z}} \, d(l^{(u)}(\bm{z}),l^{(s)}(\bm{z})) \label{eq: 105}.
\end{eqnarray}
The first quantity $\Delta_{1}$ captures the maximum mismatch of marginal distributions between domains in latent space. 
Thus, minimizing $\Delta_{1}$ supports achieving covariate-alignment. 
On the other hand, the second quantity $\Delta_{2}$ measures the maximum mismatch of optimal classifiers between domains. 
Since stochastic classifiers are considered, enforcing a small mismatch between classifiers leads to a small discrepancy of conditional distributions $p(Y|Z)$ between domains. 
In other words, 
\tnrep{minimizing $\Delta_{2}$ supports achieving concept-alignment.}{minimizing $\Delta_{2}$ over representation functions supports achieving concept-alignment.}

\begin{corollary}
\label{cor: 1}
Under the settings of Theorem \ref{theorem: 1}, for a given representation function $f$:
\begin{eqnarray}
\epsilon^{(u)}(g, l^{(u)}) &\leq& \epsilon^{(s)}(g,l^{(s)})  \nonumber\\
&+& \Delta_{1}\int_{\bm{z}} d(g(\bm{z}),l^{(u)}(\bm{z})) \, d\bm{z} \nonumber\\
&+& \Delta_{2}. 
\end{eqnarray}
\end{corollary}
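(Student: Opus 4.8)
\textbf{Proof proposal for Corollary \ref{cor: 1}.}
The plan is to start directly from the bound \eqref{eq: theorem} established in Theorem \ref{theorem: 1} and control each of the two integral terms on its right-hand side by pulling out a supremum. Since $d(\cdot,\cdot)$ is non-negative and $|p^{(u)}(\bm{z}) - p^{(s)}(\bm{z})| \ge 0$, the second term on the right of \eqref{eq: theorem} admits the elementary bound
\[
\int_{\bm{z}}|p^{(u)}(\bm{z}) - p^{(s)}(\bm{z})| \: d(g(\bm{z}),l^{(u)}(\bm{z})) \, d\bm{z} \;\le\; \Big(\max_{\bm{z}} |p^{(s)}(\bm{z}) - p^{(u)}(\bm{z})|\Big)\int_{\bm{z}} d(g(\bm{z}),l^{(u)}(\bm{z})) \, d\bm{z},
\]
which is exactly $\Delta_1 \int_{\bm{z}} d(g(\bm{z}),l^{(u)}(\bm{z})) \, d\bm{z}$ by the definition \eqref{eq: 104} of $\Delta_1$.

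Next I would treat the third term of \eqref{eq: theorem} the same way, except that here the weight $p^{(s)}(\bm{z})$ is a probability density. Factoring out the supremum of $d(l^{(u)}(\bm{z}), l^{(s)}(\bm{z}))$ gives
\[
\int_{\bm{z}} p^{(s)}(\bm{z}) \: d(l^{(u)}(\bm{z}), l^{(s)}(\bm{z})) \, d\bm{z} \;\le\; \Big(\max_{\bm{z}} d(l^{(u)}(\bm{z}),l^{(s)}(\bm{z}))\Big) \int_{\bm{z}} p^{(s)}(\bm{z}) \, d\bm{z} \;=\; \Delta_2,
\]
using $\int_{\bm{z}} p^{(s)}(\bm{z})\, d\bm{z} = 1$ together with the definition \eqref{eq: 105} of $\Delta_2$. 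Substituting these two estimates into \eqref{eq: theorem} yields the claimed inequality.

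This is essentially a one-line consequence of Theorem \ref{theorem: 1}, so there is no substantial obstacle; the only point worth flagging is that the bound is vacuous unless $\int_{\bm{z}} d(g(\bm{z}),l^{(u)}(\bm{z}))\, d\bm{z}$ is finite (e.g.\ because the latent space induced by $f$ is bounded, or $d$ is integrable against Lebesgue measure on that space), and one should note that the maxima in \eqref{eq: 104}–\eqref{eq: 105} are implicitly assumed to exist (otherwise they are replaced by essential suprema over the relevant supports). Under the standing assumptions of Theorem \ref{theorem: 1} these conditions hold, and the corollary follows.
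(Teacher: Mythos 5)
Your proposal is correct and follows exactly the same route as the paper's own proof: apply Theorem \ref{theorem: 1}, bound each integral term by pulling out the supremum defining $\Delta_1$ (resp.\ $\Delta_2$), and use $\int_{\bm{z}} p^{(s)}(\bm{z})\, d\bm{z}=1$. Your added remarks on the finiteness of $\int_{\bm{z}} d(g(\bm{z}),l^{(u)}(\bm{z}))\, d\bm{z}$ and the existence of the maxima are reasonable caveats that the paper itself only addresses informally after the corollary.
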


\begin{proof}
The proof directly follows by using Theorem \ref{theorem: 1}, the definitions in (\ref{eq: 104}), (\ref{eq: 105}) and the fact that $\int_{\bm{z}}p^{(s)}(\bm{z})  d\bm{z}=1$. 
\end{proof}

If the representation space is \tnrep{finite}{compact}
and the distance $d(\cdot, \cdot)$ \piedit{}{is bounded}
then
$\int_{\bm{z}} d(g(\bm{z}),l^{(u)}(\bm{z})) \, d\bm{z}$ is finite. 
Thus, minimizing the risk on seen domain $\epsilon^{(s)}(g,l^{(s)})$ together with the covariate-alignment term $\Delta_{1}$ and the concept-alignment term $\Delta_{2}$ \piedit{, it is able to guarantee that}{encourages the trained classifier $g$ to generalize well on the unseen domain.} 

\section{Practical Approach}
\label{sec: practical method}
Motivated by the theoretical results in Sec.~\ref{sec: main results}, we propose a framework that combines covariate-alignment algorithms together with concept-alignment algorithms 
for \piedit{approximately achieving domain-invariant.}{minimizing risk in the unseen domain.} 
\piedit{}{There are myriad ways of combining a given covariate-alignment algorithm with a given concept-alignment algorithm. For simplicity, we focus on minimizing the sum of the empirical prediction risk in the seen domains and a nonnegative weighted combination of the cost functions for covariate-\tnrep{}{alignment} and concept-alignment algorithms. We consider two well-studied algorithms for covariate-alignment, namely CORAL \cite{sun2016deep} and MMD \cite{li2018domain}, and one well-studied concept-alignment algorithm, namely CEM \cite{nguyen2022conditional}. We propose two DG algorithms named CORAL-CEM and MMD-CEM whose objective functions are defined as follows:}
\begin{equation}
\label{eq: practical objective function coral}
\!\!\! R_{\text{\tiny CORAL-CEM}}(f,g) := R(f,g) + \alpha L_{\text{\tiny CORAL}}(f) + \beta L_{\text{\tiny CEM}}(f,g),
\end{equation}
\begin{equation}
\label{eq: practical objective function mmd}
\!\!\! R_{\text{\tiny MMD-CEM}}(f,g) := R(f,g) + \alpha L_{\text{\tiny MMD}}(f) + \beta L_{\text{\tiny CEM}}(f,g),
\end{equation}
where the first term is the \tnrep{}{summation of the} empirical risk \tnrep{in}{from} \tnrep{the}{all} seen domains, \textit{i.e.}, \tnrep{}{summation of} $\epsilon^{(s)}(g,l^{(s)})$ over \tnrep{}{all} seen domains, the second term is the covariate-alignment term which is implemented via either the CORAL algorithm \cite{sun2016deep} or the MMD algorithm \cite{li2018domain}, the third term is the concept-alignment term which is implemented via the CEM algorithm \cite{nguyen2022conditional}, and $\alpha,\beta$ are two positive hyper-parameters that control the trade-off between these loss terms. 

\section{Experiments}
\label{sec: numerical results}
\subsection{Datasets}

In this section, we examine our proposed methods on two datasets CMNIST \cite{arjovsky2019invariant} and CS-CMNIST \cite{ahuja2021empirical}.  
To illustrate how CMNIST and CS-CMNIST datasets are generated, we use $e$ to denote domain index, $e=1,2,3$,
$X^e_g$ to denote the gray image in domain $e$, $Y^e_g$ to denote the corresponding label of $X^e_g$, $X^e$ to denote the colored image in domain $e$, and $Y^e$ to denote the corresponding label of $X^e$.
Each domain is specified by a bias parameter $\theta^e$ changing from domain to domain. We use $C^e$ to denote the color index, \piedit{}{i.e., the color assigned to $X^e_g$ to produce $X^e$.}  
Let $\oplus$ denote the XOR operator, and $\textup{Bern}(p)$ denote the Bernoulli distribution with parameter $p$. Of these three domains, two are selected as seen domains while the rest is unseen domain.

\textbf{Colored MNIST} (CMNIST) \cite{arjovsky2019invariant} is a variant of the gray MNIST handwritten digit
dataset\tnrep{ \cite{lecun1998mnist}.}{.} 
The task is to classify a colored digit (in red or blue color) into two classes: the digit is strictly less than five or the digit is greater or equal five. There are two seen domains each contains 25,000 images and one unseen domain contains 20,000 images. 
By adding the color (spurious feature) to the digit, the label is more correlated with the color than with the digit, thus, any algorithm simply aims to \tnrep{minimizing}{minimize} the training error will tend to discover the color rather than the shape of the digit and fail at the testing phase.

\begin{figure}[ht]
\centering
\includegraphics[width = 2.1 in]{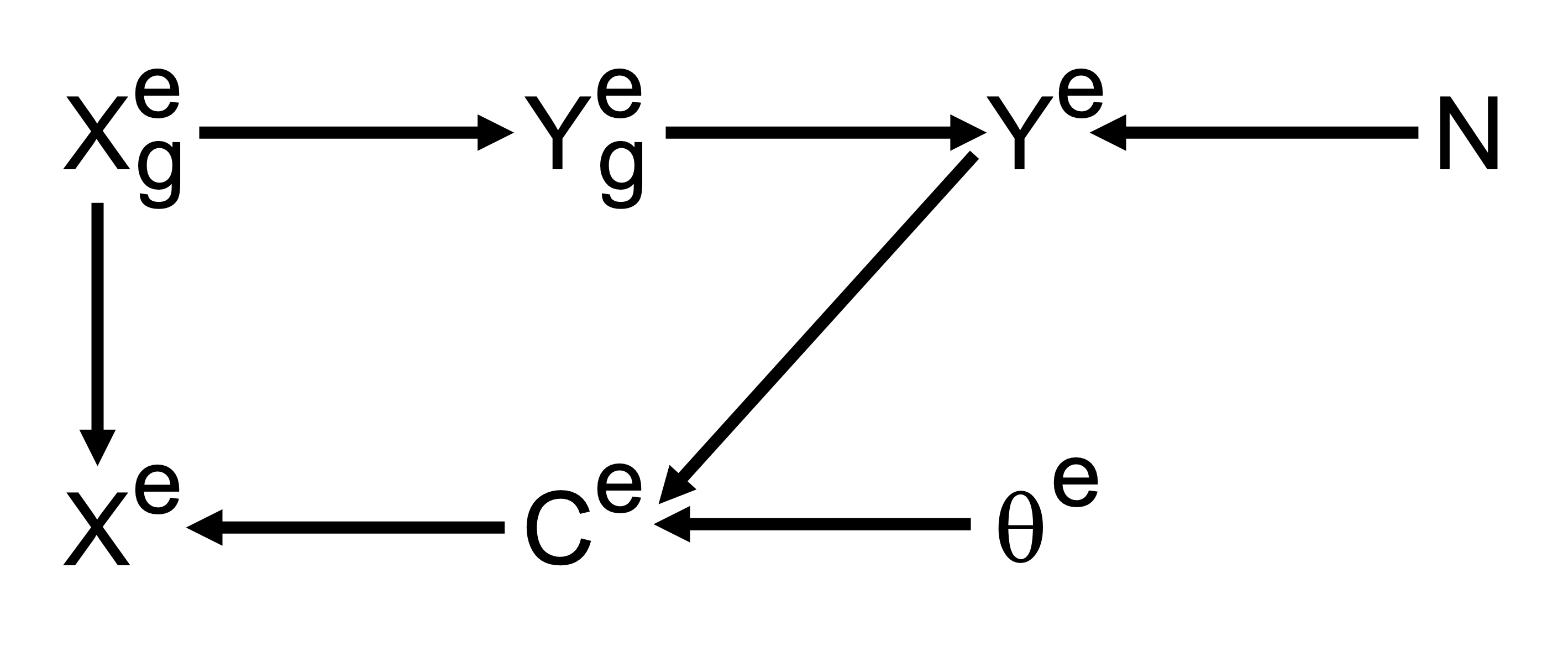}
\caption{Graphical model for CMNIST.}
\label{fig: 1}
\end{figure}
The graphical model for CMNIST dataset is illustrated in Fig. \ref{fig: 1} that contains the following steps:
\begin{enumerate}
    \item $Y_g^e \leftarrow L(X_g^e)$: 
    from 10 digits, we construct a binary classification problem by labeling $Y_g^e=0$ if the digit is less than or equal to four and \tnrep{}{labeling} $Y_g^e=1$, \tnrep{if the digit is strictly greater than four.}{otherwise.} 
    
    \item $Y^e \leftarrow Y^e_g \oplus N$, $N \sim \textup{Bern}(0.25)$: \tnrep{adding}{}noise \tnrep{}{is added} to the label $Y^e$ of the colored image. 
    
    \item $C^e  \leftarrow Y^e \oplus \theta^e$, $\theta^e \sim \textup{Bern}(p^e)$: the index color $C^e$ \tnrep{for $X_g^e$}{} is selected with a domain bias parameter $\theta^e$. 
    For \tnrep{three domains,}{} $e = 1,2,3$, \tnrep{the bias parameter}{} $p^e=0.1, 0.2, 0.9$, respectively. 
    
    \item $X^e \leftarrow T(X_g^e, C^e)$: the gray image $X^e_g$ is colored with the index color $C^e$ to produce the colored image $X^e$.  
\end{enumerate}

\textbf{Covariate-Shift-CMNIST} (CS-CMNIST) \cite{ahuja2021empirical} is a synthetic dataset derived from CMNIST. 
This dataset contains three domains (two training domains and one test domain) having 20,000 images each.
There are ten classes in CS-CMNIST where each class corresponds to a digit from one to nine. 
Each class is associated with one color that is strongly correlated with the digits in the two seen (training) domains but is weakly correlated with the digits in the unseen (test) domain. 

\begin{figure}[ht]
\centering
\includegraphics[width = 1.5 in]{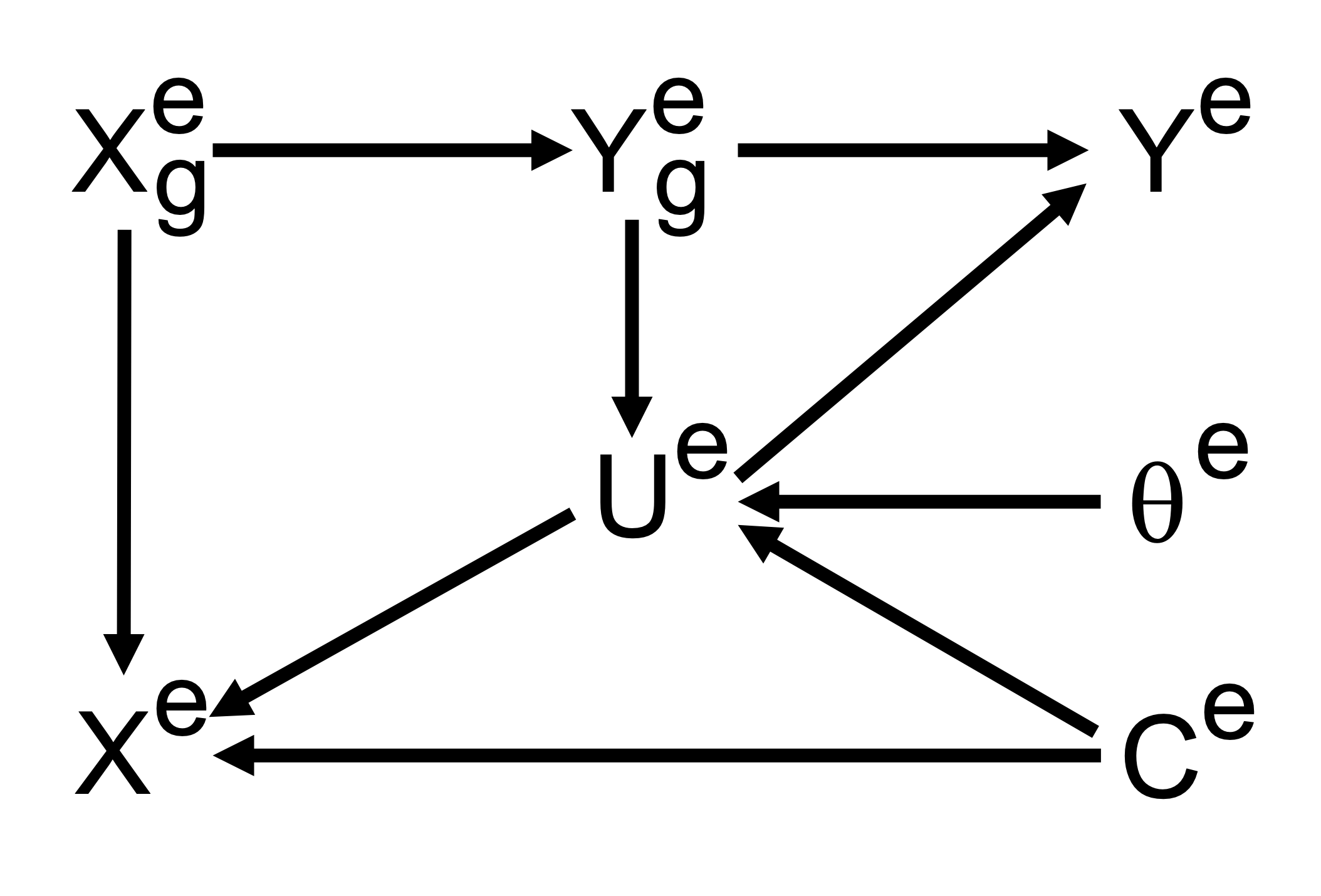}
\caption{Graphical model for CS-CMNIST.}
\label{fig: 2}
\end{figure}

The graphical model for CS-CMNIST dataset is illustrated in Fig. \ref{fig: 2} that contains the following steps:
\begin{enumerate}
    \item $Y_g^e \leftarrow L(X_g^e)$: the label of gray image is a function of gray image. There are 10 classes with labels from 0 to 9. 
    
    \item $C^e \leftarrow \textup{Uniform}(\{0,\dots,9\})$: 10 colors are picked up randomly (from color 0 to color 9).
    
    \item $U^e  \leftarrow \textup{Bern}\Big( (C^e \oplus Y^e_g) \theta^e + (1-(C^e \oplus Y^e_g))(1-\theta^e) \Big)$: \tnrep{select}{} a pair of image and its color $(X^e_g, C^e)$ \tnrep{}{is selected} with probability $\theta^e$ if $Y_g^e=C^e$, else if $Y_g^e \neq C^e$, select $(X^e_g, C^e)$ with probability $1-\theta^e$. For \tnrep{three domains,} $e = 1,2,3$, the bias parameter $\theta^e=0.1, 0.2, 0.9$, respectively. 
    
    \item $X^e \leftarrow T(X_g^e, C^e)|U^e=1$, $Y^e \leftarrow Y^e_g|U^e=1$: \tnrep{}{the} gray image $X^e_g$ is colored with color $C^e$ to produce colored image $X^e$. The colored image is labeled by $Y^e_g$.  
\end{enumerate}

From the graphical models in Fig. \ref{fig: 1} and \ref{fig: 2}, it is worth noting that while concept-shift is common in both CMNIST and CS-CMNIST, CS-CMNIST also suffers from covariate-shift (the third steps in its data generating process).

\subsection{Compared Methods}
We compare our proposed algorithms CORAL-CEM and MMD-CEM to ERM \cite{vapnik1999overview}, IRM \cite{arjovsky2019invariant},  IB-ERM \cite{ahuja2021invariance}, IB-IRM \cite{ahuja2021invariance}, MMD-IRM \cite{guo2021out}, and CEM \cite{nguyen2022conditional}. 
Since the code for MMD-IRM algorithm \tnrep{is}{was} not released by its authors, we implemented this algorithm according to the description in \cite{guo2021out}.

\subsection{Implementation}
For the CS-CMNIST dataset, we follow the learning model in \cite{ahuja2021invariance} that contains three convolutional layers with the corresponding feature dimensions of $256$, $128$, and $64$. 
We use a stochastic gradient descent optimizer for training, the batch size is set to $128$, the learning rate is $10^{-1}$ and decays after $600$ steps while the total number of steps is $2,000$. 
25 trials corresponding to 25 pairs of hyper-parameters \tnrep{$\alpha$ and $\beta$}{$(\alpha, \beta)$} are uniformly selected from $[10^{-1}, 10^4]$.

For the CMNIST dataset, we follow the setting in \cite{gulrajani2020search} and use MNIST-ConvNet with four convolutional layers as the learning model. 
The details of MNIST-ConvNet can be found in Table 7 of \cite{gulrajani2020search}. 
10 trials corresponding to 10 pairs of hyper-parameters \tnrep{$\alpha$ and $\beta$}{$(\alpha,\beta)$} are uniformly selected in $[10^{-1}, 10^4]$. For each trial, the learning rate is randomly selected in $[10^{-4.5},10^{-3.5}]$ while the batch size is randomly selected in $[2^{3},2^{9}]$.

We use the training-domain validation set tuning procedure  \cite{gulrajani2020search} for model selection, \textit{i.e.}, selecting the model with the highest validation accuracy on the validation set sampled from seen domain data. We repeat our entire experiment five times for CS-CMNIST dataset and three times for CMNIST dataset. 
Finally, the average accuracy and its standard deviation \tnrep{will be}{are} reported. Due to the limited space, the details of our implementation together with the source code can be found at  \href{https://github.com/thuan2412/Joint-covariate-alignment-and-concept-alignment-for-domain-generalization}{this link}\footnote{https://github.com/thuan2412/Joint-covariate-alignment-and-concept-alignment-for-domain-generalization}.

\subsection{Results and Discussion}
\begin{table*}[h]
\centering
\renewcommand{\arraystretch}{1.5}
\resizebox{\textwidth}{!}{\begin{tabular}{c c c c c c c c c}
\hline
\textbf{Datasets}  & \textbf{ERM \cite{vapnik1999overview}} & \textbf{IRM \cite{arjovsky2019invariant}} & \textbf{IB-ERM \cite{ahuja2021invariance}} & \textbf{IB-IRM \cite{ahuja2021invariance}} & \textbf{MMD-IRM \cite{guo2021out}} & \textbf{CEM \cite{nguyen2022conditional}} & \textbf{MMD-CEM (our)} & \textbf{CORAL-CEM (our)} \\ 
\hline
\textbf{CMNIST \cite{arjovsky2019invariant}} & 51.2 ± 0.1 & 51.4 ± 0.1 & 51.2 ± 0.3  & 52.1 ± 0.2  &   51.4 ± 0.1 & \textbf{52.5 ± 0.3} & 52.0 ± 0.2  & \textbf{52.5 ± 0.1}\\ 
\textbf{CS-CMNIST \cite{ahuja2021empirical}} & 60.3 ± 1.2  & 62.5 ± 1.1   & 71.5 ± 0.7 & 71.9 ± 0.7  &  77.2 ± 0.9 & 86.1 ± 0.8 & \textbf{90.7 ± 0.9} & 89.9 ± 0.6\\ 
\hline
\end{tabular}}
\caption{Average accuracy of the compared methods.}
\label{table: 1}
\end{table*}

Table \ref{table: 1} shows the average accuracy of the compared methods. 
As seen, our proposed algorithms outperform the rest of the tested methods with a margin of nearly 4\% for CS-CMNIST dataset. 
However, when evaluating on a more challenging CMNIST dataset, the proposed algorithms only achieve comparable or slightly better performance than other \tnrep{}{tested} methods. 
The substantial gain on CS-CMNIST can be explained by the way the data is generated. Indeed, by construction, CS-CMNIST suffers from both covariate-shift and concept-shift, therefore, it is necessary for achieving both covariate and concept alignment for better DG on CS-CMNIST. 
On the other hand, because CMNIST is designed in a way such that there exists a strong spurious correlation between colors and labels, no algorithm works on this dataset. This observation is also confirmed by the numerical results in \cite{gulrajani2020search}.

\section{Conclusions}
\label{sec: conclusion}

In this paper, by revisiting the upper bound of generalization error of the unseen domain, we motivate a domain generalization framework that combines covariate-alignment algorithms together with concept-alignment algorithms to simultaneously handle both the covariate distribution shift and the concept distribution shift\tnrep{ in DG}{}. Our numerical results show a gain on generalization performance which confirms our theoretical results. 

\small
\bibliography{example_paper}
\bibliographystyle{IEEEtran}
\end{document}